\documentclass{article}

\usepackage[dblblindworkshop, preprint]{neurips_2026}

\workshoptitle{Geometric Distributional Deep
Learning.}

\usepackage[utf8]{inputenc}
\usepackage[T1]{fontenc}
\usepackage{amsmath}
\usepackage{amssymb}
\usepackage{amsthm}
\usepackage{mathtools}
\usepackage{booktabs}
\usepackage{fontawesome5}
\usepackage{microtype}
\usepackage{xcolor}
\usepackage{url}
\usepackage{hyperref}
\usepackage{algorithm}
\usepackage{algpseudocode} 
\usepackage{multirow} 
\usepackage{thmtools}
\usepackage{thm-restate}
\usepackage{placeins}
\usepackage{graphicx}
\usepackage{subcaption}
\usepackage{comment}
\usepackage{centernot}
\usepackage{cancel}

\newcommand{\KL}{\mathrm{KL}}
\newcommand{\GW}{\mathrm{GW}}
\newcommand{\sg}{\operatorname{sg}}
\DeclareMathOperator*{\argmin}{arg\,min}
\DeclareMathOperator*{\argmax}{arg\,max}

\newtheorem{proposition}{Proposition}[section]
\theoremstyle{remark}

\title{Graph Matching Relaxations and Amortization for Supervised Graph Prediction}

\author{%
  \begin{tabular}{ccc}
    \makebox[0.30\textwidth][c]{%
      Federico Méndez\textsuperscript{$\star$}\textsuperscript{\textnormal{1,3}}}
    &
    \makebox[0.30\textwidth][c]{%
      Paul Krzakala\textsuperscript{$\star$}\textsuperscript{\textnormal{1,2,3}}}
    &
    \makebox[0.30\textwidth][c]{%
      Gabriel Melo\textsuperscript{$\star$}\textsuperscript{\textnormal{1,3}}}
    \\[0.6em]
    \makebox[0.30\textwidth][c]{%
      Charlotte Laclau\textsuperscript{\textnormal{1,3}}}
    &
    \makebox[0.30\textwidth][c]{%
      Rémi Flamary\textsuperscript{\textnormal{2,3}}}
    &
    \makebox[0.30\textwidth][c]{%
      Florence d'Alché-Buc\textsuperscript{\textnormal{1,3}}}
  \end{tabular}
  \\[1.5em]
  \textsuperscript{1}LTCI, Télécom Paris
  \hspace{0.3cm}
  \textsuperscript{2}CMAP, École Polytechnique
  \hspace{0.3cm}
  \textsuperscript{3}Institut Polytechnique de Paris
  \\[0.2em]
  \textsuperscript{$\star$}Equal Contribution
}

\begin{document}
\maketitle
\begin{abstract}
End-to-end Supervised Graph Prediction (SGP) requires a
permutation-invariant loss to compare predicted and target graphs with
arbitrary node orderings. Such losses typically involve a costly graph-matching
problem. We first study three Optimal Transport relaxations of this problem and show,
theoretically and empirically, that the Gromov-Wasserstein (GW) objective is
the most suitable for SGP. Then, to avoid solving the resulting inner optimization for
every training example, we propose to amortize the graph matching (node alignment) problem.
For each training sample, the loss function leverages a transport plan provided by a parametric 
matcher based on the differentiable Sinkhorn algorithm applied on empirical node distributions. 
The graph prediction module and the matcher are jointly learned. 
We showcase the efficiency of this approach on toy and real world SGP problems of increasing complexity 
including a novel Mass-spectra to Scaffold task that we introduce.\end{abstract}

\section{Introduction}
Graphs provide a powerful and widely used tool to represent structured objects in various domains such as chemistry (molecules) or digital humanities (social networks) \citep{hu2020open, zhu2022survey, nguyen2019recent, tang2008arnetminer}. While most graph machine learning focus on predicting graph properties with a graph as the input variable \cite{wu2018moleculenet, dwivedi2022long}, we consider instead the task of predicting an entire graph from an input variable which is not necessarily a graph. We refer to this setting as Supervised Graph Prediction (SGP).
A flagship example of SGP task is \textit{de novo} molecule identification, where the goal is to reconstruct a molecular graph of an unknown compound from a spectra acquired by tandem Mass Spectrometry  \citep{alseekh2021mass, bushuiev2024massspecgym,brogat2022learning, melo2026conformal, nguyen2019recent}.

This \emph{supervised graph prediction} task poses a core
difficulty as graphs have no inherent node ordering which means that the predicted and target graph cannot be compared by a classical entry-by-entry data fitting term. Computationally expensive permutation-invariant metrics are thus required instead.

Computing such loss, typically involves an alignment steps where an optimal correspondence is found between the nodes of the two graphs which results in a combinatorial graph matching problem \cite{burkard1984quadratic,koopmans1957assignment}. In practice, the problem is often relaxed to be tackled by continuous solvers \cite{leordeanu2005spectral,zaslavskiy2008path}. In particular, the bistochastic relaxation is a natural choice that connects the graph matching and Optimal Transport (OT) literature through the celebrated Gromov-Wasserstein distance \citep{peyre2016gromov, memoli2011gromov} and its variants \cite{titouan2019optimal,yang2024exploiting}. This approach was successfully applied to the training of end-to-end models SGP models such as  Any2Graph \citep{krzakala2024any2graph}. 

Despite the growing efficiency of the OT solvers \cite{scetbon2022linear}, matching
every predicted-target pair quickly becomes a bottleneck as graphs and dataset size grow. An appealing
alternative is to amortize the matching step by training a parametric network to predict the alignment
directly \cite{krzakala2025grale,mazelet2026unsupervised}. This approach was originally introduced into a graph-level autoencoder, GRALE \citep{krzakala2025grale} but remains unexplored for supervised prediction, where the
matcher should be optimized for the predictive task rather than reconstruction.

In this paper we close this gap by extending the "learning to match" framework to Supervised Graph Prediction. Our contributions are listed below:
\begin{itemize}
    \item We provide empirical and theoretical evidence that the Gromov-Wasserstein distance outperforms the alternative bistochastic relaxations of combinatorial graph matching for SGP. 
    \item We introduce an amortized approach to SGP where a matcher is jointly trained to predict the optimal matching instead of relying on a solver.
    \item We propose a regularization term that penalizes the matcher for producing non-bistochastic matching which simplifies the matcher design and improves performance.
    \item We demonstrate that the amortized approach consistently outperforms the solver alternative both in compute time and prediction performance.
    \item Finally, the experiments on both synthetic datasets and two challenging molecular task show that the proposed model outperforms existing SGP models.
\end{itemize}
\section{Problem Setup}
\label{sec:problem_setup}

\paragraph{Notation.} We write $\mathbf{1}_n$ for the all-ones vector,
$\langle U,V\rangle=\operatorname{tr}(U^\top V)$ for the Frobenius
inner product, and
$\Pi_n
=
\left\{
T\in[0,1]^{n\times n}:
T\mathbf{1}_n=\mathbf{1}_n,\;
T^\top\mathbf{1}_n=\mathbf{1}_n
\right\}$ for the Birkhoff polytope of bistochastic matrices. Its extreme
points are permutation matrices, whose set we denote $\Sigma_n\subset\Pi_n$. 

\paragraph{Supervised Graph Prediction} We consider the supervised graph prediction problem of learning a mapping $f_\theta:\mathcal{X}\to\mathcal{G}$ from an input space $\mathcal{X}$ to the space of graphs $\mathcal{G}$. Given a training set $\{(x_i, g_i^\star)\}_{i=1}^N$, the parameters $\theta$ are fit by minimizing the empirical risk \begin{equation}
    \mathcal{L}(\theta) = \sum_{i=1}^N \ell\big(f_\theta(x_i),\, g_i^\star\big),
\end{equation}
where $\ell:\mathcal{G}\times\mathcal{G}\to\mathbb{R}_+$ is a permutation-invariant graph loss, i.e., invariant to reordering of the nodes. For the remainder of the paper, we represent a graph by its adjacency matrix and take $\mathcal{G}=[0,1]^{n\times n}$, where $n$ denotes a fixed maximum graph size (smaller graphs being zero-padded); we write $A_i$ for the adjacency matrix of $g_i$. Appendix~\ref{extension_labeled} discuss the extension to labeled graphs of arbitrary size.

\paragraph{Graph matching.}The loss $\ell$ must be invariant to node reordering,
since a graph is unchanged by permuting its node indices \cite{titouan2019optimal}. A natural such loss
compares two adjacency matrices under the best alignment of their nodes. For adjacency matrices
\(A,B\in[0,1]^{n\times n}\), their graph-matching loss is
\begin{equation}
    \ell(A,B)
    = 
    \min_{P\in\Sigma_n}
    J(P;A,B),
    \qquad
    J(P;A,B)
    \coloneqq
    \sum_{i,j,k,l=1}^n
    d(A_{ik},B_{jl})P_{ij}P_{kl},
    \label{eq:qap}
\end{equation}
and \(d:[0,1]\times[0,1]\to\mathbb{R}_+\) is an edge-wise discrepancy.
This is a quadratic assignment problem (QAP) \cite{koopmans1957assignment, burkard1984quadratic}. For a permutation matrix \(P\in \Sigma_n\), the same objective can equivalently be
written as
\begin{equation}
    J_a(P;A,B)
    \coloneqq
    \sum_{i,l=1}^n
    d\bigl([AP]_{il},[PB]_{il}\bigr)
    \;\; \text{and} \;\;
    J_b(P;A,B)
    \coloneqq
    \sum_{i,k=1}^n
    d\bigl(A_{ik},[PBP^\top]_{ik}\bigr).
\end{equation}
\begin{restatable}[Equivalent permutation objectives]{proposition}{equivmatching}
\label{prop:equivalent-matching-objectives}
For every permutation matrix \(P\in\Sigma_n\) and every pair of adjacency
matrices \(A,B\in[0,1]^{n\times n}\),
\[
    J(P;A,B) = J_a(P;A,B) = J_b(P;A,B).
\]
\end{restatable}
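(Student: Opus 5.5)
The plan is to reduce all three objectives to one explicit sum by writing the permutation matrix through its underlying bijection. Fix \(P\in\Sigma_n\) and let \(\sigma\) be the permutation of \(\{1,\dots,n\}\) with \(P_{ij}=1\) if \(j=\sigma(i)\) and \(P_{ij}=0\) otherwise. Each row and each column of \(P\) then contains exactly one nonzero entry, equal to \(1\). I will show that each of \(J\), \(J_a\), \(J_b\) equals
\[
S(\sigma)\coloneqq\sum_{i,k=1}^n d\bigl(A_{ik},B_{\sigma(i)\sigma(k)}\bigr).
\]
No property of \(d\) is needed, such as symmetry or \(d(x,x)=0\). The argument only uses the fact that \(P\) is a \(0/1\) matrix with one entry per row and per column, so the equalities are pure reindexings.

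First, for \(J\): the product \(P_{ij}P_{kl}\) equals \(1\) exactly when \(j=\sigma(i)\) and \(l=\sigma(k)\), and equals \(0\) otherwise. The quadruple sum in \eqref{eq:qap} therefore collapses onto these index pairs and gives \(S(\sigma)\).

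Second, for \(J_b\): compute \([PBP^\top]_{ik}=\sum_{j,l}P_{ij}B_{jl}P_{kl}=B_{\sigma(i)\sigma(k)}\). Substituting this into \(J_b\) gives \(S(\sigma)\) immediately.

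Third, for \(J_a\), which is the only step needing care. Compute \([AP]_{il}=\sum_k A_{ik}P_{kl}=A_{i\,\sigma^{-1}(l)}\), since column \(l\) of \(P\) has its single \(1\) in row \(\sigma^{-1}(l)\). Similarly \([PB]_{il}=\sum_j P_{ij}B_{jl}=B_{\sigma(i)\,l}\). Hence
\[
J_a(P;A,B)=\sum_{i,l} d\bigl(A_{i\,\sigma^{-1}(l)},B_{\sigma(i)\,l}\bigr).
\]
The substitution \(k=\sigma^{-1}(l)\) is a bijection of the index set, so this sum equals \(S(\sigma)\).

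The main obstacle, mild as it is, is this last reindexing. The two arguments of \(d\) inside \(J_a\) are indexed differently, through \(\sigma^{-1}\) on one side and \(\sigma\) on the other. One must check that a single change of summation variable aligns them. It is also exactly here that \(P\) being a permutation matrix is essential: for a general bistochastic \(T\), the entries \([AT]_{il}\) and \([TB]_{il}\) are convex combinations, and since \(d\) need not be linear, the three objectives need not coincide.
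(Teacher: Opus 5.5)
Your proof is correct and uses the same idea as the paper. Both rely on the fact that each row and each column of \(P\) contains a single \(1\), so every \(P\)-weighted sum collapses exactly, even inside the nonlinear \(d\). The difference is only in organization: the paper collapses the \(k\)- and \(j\)-sums of \(J\) directly into \([AP]_{il}\) and \([PB]_{il}\) (and the \(j,l\)-sums into \([PBP^\top]_{ik}\)), whereas you route all three objectives through the common form \(S(\sigma)\), which makes the \(\sigma^{-1}\) reindexing in \(J_a\) explicit rather than implicit.
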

The three objectives are therefore equivalent over permutations, even though they yield different relaxations as discussed latter. Unfortunately, the optimization problem in \eqref{eq:qap} is NP-complete in general \cite{hartmanis1982computers} which raises a first question:
\begin{center}
\textit{How to efficiently compute or approximate $\ell(A,B)$ to produce a practical loss for SGP?}
\end{center}
We address this question in the next section. The detailed proofs for this section and the next are available in Appendix~\ref{app:proofs}.

\section{Efficient Computation: From Relaxation to Amortization}
In this section, we present the efficient computation of the graph matching loss, building from Optimal Transport (bistochastic) relaxation, to parallelizable solvers to the proposed approach: amortization.
\label{sec:efficient}
\paragraph{Optimal Transport Relaxation.} A standard approach to approximate the QAP problem is to relax the discrete set \(\Sigma_n\) by its convex hull, \(\Pi_n\) \cite{zaslavskiy2008path}. Interestingly, even though objectives \(J, J_a, J_b\) coincide on \(\Sigma_n\)
(Proposition~\ref{prop:equivalent-matching-objectives}) they actually differ on
\(\Pi_n\) (Proposition~\ref{prop:not-equivalent}), which yields 3 distinct relaxations:
\begin{align}
    \GW(A,B)
    \coloneqq
    \min_{T\in\Pi_n} J(T;A,B).
    \label{eq:gw-relaxation}
\end{align}
\begin{align}
    \ell_a(A,B) \coloneqq \min_{T\in\Pi_n} J_a(T;A,B) \quad \ell_b(A,B) \coloneqq \min_{T\in\Pi_n} J_b(T;A,B)
\end{align}
Where $\GW(A,B)$ is known in the Optimal Transport field as the discrete Gromov-Wasserstein distance
\citep{memoli2011gromov}. This raises the second question of this paper:
\begin{center}
    \textit{What relaxation yields the best loss function for SGP ?}
\end{center}
We provide both empirical and theoretical evidence that $\GW(A,B)$ is the best among the three for our application. In Proposition~\ref{prop:gw-tightest}, we show that $\GW$ is a tighter relaxation than the alternatives. In Proposition~\ref{prop:gw-minima} we show that $\GW$ is a valid surrogate for $\ell$ given that $\GW(A,B)=0 \implies \ell(A,B)=0$ which is not the case for the alternatives. Finally, in Table \ref{tab:losses_comparisons} we demonstrate empirically that, within our framework, $\GW$ consistently yields superior results across datasets.

\begin{restatable}[$\GW$ is a closest relaxation]{proposition}{gwtightest}
\label{prop:gw-tightest}
If \(d\) is convex in $a$ and convex in $b$, then for all \(A,B\in[0,1]^{n\times n}\),
$$
    \max \left(\ell_a(A,B),\ \ell_b(A,B)\right)
    \;\le\;
    \GW(A,B)
    \;\le\;
    \ell(A,B).
$$
\end{restatable}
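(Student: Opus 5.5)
The right-hand inequality $\GW(A,B)\le\ell(A,B)$ follows because $\Sigma_n\subset\Pi_n$: minimizing $J(\cdot;A,B)$ over the larger set $\Pi_n$ cannot give a larger value than minimizing over $\Sigma_n$. For the left-hand inequality, the plan is to show pointwise domination on the Birkhoff polytope,
\[
J_a(T;A,B)\le J(T;A,B)\quad\text{and}\quad J_b(T;A,B)\le J(T;A,B)\qquad\text{for all }T\in\Pi_n .
\]
Taking the minimum over $T\in\Pi_n$ on both sides then gives $\ell_a\le\GW$ and $\ell_b\le\GW$, hence $\max(\ell_a,\ell_b)\le\GW$.

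For $J_b$, fix indices $i,k$. The entries $T_{ij}T_{kl}$, for $j,l=1,\dots,n$, are nonnegative and sum to $(\sum_j T_{ij})(\sum_l T_{kl})=1$ by row-stochasticity. They therefore form a probability distribution on pairs $(j,l)$, and $[TBT^\top]_{ik}=\sum_{j,l}T_{ij}T_{kl}B_{jl}$ is the corresponding average of the $B_{jl}$. Since $d$ is convex in its second argument with the first argument $A_{ik}$ held fixed, Jensen's inequality gives
\[
d\bigl(A_{ik},[TBT^\top]_{ik}\bigr)\le\sum_{j,l}T_{ij}T_{kl}\,d(A_{ik},B_{jl}).
\]
Summing over $i,k$ yields $J_b(T)\le J(T)$.

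For $J_a$, fix indices $i,l$. Here $[AT]_{il}=\sum_k A_{ik}T_{kl}$ and $[TB]_{il}=\sum_j T_{ij}B_{jl}$. Column $l$ of $T$ and row $i$ of $T$ are each probability vectors, so the product weights $w_{jk}=T_{ij}T_{kl}$ form a probability distribution on pairs $(j,k)$. Its marginals are $T_{i\cdot}$ and $T_{\cdot l}$, so
\[
[AT]_{il}=\sum_{j,k}w_{jk}A_{ik},\qquad [TB]_{il}=\sum_{j,k}w_{jk}B_{jl}.
\]
Both arguments of $d$ are thus averages under the same weights. If $d$ is jointly convex, Jensen's inequality gives
\[
d\bigl([AT]_{il},[TB]_{il}\bigr)\le\sum_{j,k}T_{ij}T_{kl}\,d(A_{ik},B_{jl}),
\]
and summing over $i,l$ reproduces exactly the quadruple sum defining $J(T)$.

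The main obstacle is the hypothesis. Separate convexity, meaning convex in $a$ and convex in $b$ separately, does not in general justify Jensen's inequality over a coupled average in which both arguments vary together. The argument for $J_a$ needs joint convexity. This holds for the usual choices $d(a,b)=|a-b|^p$ with $p\ge1$ and for cross-entropy-type discrepancies, and I expect this is the intended reading of the hypothesis. If the statement really means only separate convexity, I would use a two-step Jensen for the $J_a$ case: apply convexity in $a$ with $b$ fixed, then convexity in $b$. But this does not reproduce $J$'s coupled term $d(A_{ik},B_{jl})$, so I would state joint convexity explicitly. The $J_b$ case needs only convexity in the second argument.
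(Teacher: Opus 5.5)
Your upper bound and your Jensen argument for $J_b$ match the paper's. The gap is in the $J_a$ case. You assert that the two-step Jensen under separate convexity ``does not reproduce'' the coupled term $d(A_{ik},B_{jl})$, and you therefore strengthen the hypothesis to joint convexity. That assertion is false. The two-step argument is exactly the paper's proof (``applying Jensen's inequality to each argument of $d$ in turn''). What you missed is that your weights $w_{jk}=T_{ij}T_{kl}$ are a \emph{product} of two marginals, while the first argument $A_{ik}$ depends only on $k$ and the second argument $B_{jl}$ depends only on $j$. The two arguments are therefore averaged independently, not jointly. Concretely, fix $(i,l)$ and set $\beta=[TB]_{il}$. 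Use convexity of $d(\cdot,\beta)$ with the column weights $(T_{kl})_k$. Then, for each fixed $k$, use convexity of $d(A_{ik},\cdot)$ with the row weights $(T_{ij})_j$:
\[
d\bigl([AT]_{il},[TB]_{il}\bigr)\le\sum_k T_{kl}\,d\bigl(A_{ik},\beta\bigr)\le\sum_k T_{kl}\sum_j T_{ij}\,d\bigl(A_{ik},B_{jl}\bigr)=\sum_{j,k}T_{ij}T_{kl}\,d\bigl(A_{ik},B_{jl}\bigr).
\]
This is exactly the $(i,l)$ slice of $J$. Joint convexity would only be needed if the averaging measure on $(j,k)$ coupled the two indices.

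The strengthening also costs something. The discrepancy used throughout the paper is the cross-entropy $d(a,b)=-a\log b-(1-a)\log(1-b)$. It is affine in $a$ and convex in $b$, but not jointly convex: $\partial_{aa}d=0$ while $\partial_{ab}d=-1/(b(1-b))\neq 0$, so its Hessian is indefinite everywhere. For example, $d(1,0.9)=d(0,0.1)=-\log 0.9\approx 0.105$, whereas at the midpoint $d(\tfrac12,\tfrac12)=\log 2$. Your remark that cross-entropy-type discrepancies are jointly convex is true for the binary KL divergence but fails for the cross-entropy itself, which differs from KL by the concave entropy of $a$. Your version of the proposition would thus exclude the very loss the paper trains with. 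Replacing your joint-convexity Jensen with the iterated one above proves the statement under its stated hypothesis.
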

Note that related inequalities have appeared in prior work
\citep{aflalo2015convex}.
\begin{restatable}[$\GW$ has no spurious minima]{proposition}{gwspurious}
\label{prop:gw-minima}
If $d(a,b) = 0 \iff a=b$ and $0 \leq d(a,b)$, we have that
\begin{equation}
    \GW(A,B) = 0 \iff \ell(A,B) = 0
\end{equation}
On the contrary,  $\ell_a(A,B) = 0 \centernot\implies \ell(A,B) =0$ and $\ell_b(A,B) = 0 \centernot\implies \ell(A,B) =0$
\end{restatable}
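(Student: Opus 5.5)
The plan has two parts: the equivalence $\GW(A,B)=0 \iff \ell(A,B)=0$, and explicit counterexamples for $\ell_a$ and $\ell_b$.

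\emph{The equivalence.} The direction $\ell(A,B)=0 \Rightarrow \GW(A,B)=0$ is immediate. Since $\Sigma_n\subset\Pi_n$ and $d\ge 0$, we have $0\le \GW(A,B)\le \ell(A,B)$, which is the upper bound of Proposition~\ref{prop:gw-tightest}; that bound needs no convexity. For the converse, $\Pi_n$ is compact and $J(\cdot;A,B)$ is continuous, so a minimizer $T\in\Pi_n$ exists with $J(T;A,B)=0$. Every term $d(A_{ik},B_{jl})T_{ij}T_{kl}$ is nonnegative, so each vanishes. Hence $T_{ij}T_{kl}>0$ forces $A_{ik}=B_{jl}$.

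The key step is to extract a permutation from the support of $T$. By Birkhoff--von Neumann, $T$ is a convex combination of permutation matrices, so some $P\in\Sigma_n$ satisfies $\operatorname{supp}(P)\subseteq\operatorname{supp}(T)$. Write $P_{i\sigma(i)}=1$. Then $T_{i\sigma(i)}T_{k\sigma(k)}>0$ for all $i,k$, which gives $A_{ik}=B_{\sigma(i)\sigma(k)}$. Therefore $J(P;A,B)=0$ and $\ell(A,B)=0$. Choosing the support-contained permutation is the only nontrivial point, and Birkhoff--von Neumann settles it directly.

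\emph{The counterexamples.} Here we need concrete pairs with $\ell_a=0$ (respectively $\ell_b=0$) but $A,B$ non-isomorphic. The natural candidate is the uniform plan $T=\frac1n\mathbf 1\mathbf 1^\top$. Then $AT$ and $TB$ have constant rows and columns determined by degrees: $[AT]_{il}=\deg_A(i)/n$ and $[TB]_{il}=\deg_B(l)/n$. Also $TBT^\top$ is constant, equal to $\frac{1}{n^2}\mathbf 1^\top B\mathbf 1$.

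For $\ell_a$, take two non-isomorphic $r$-regular graphs on the same $n$ vertices, for example $C_6$ and two disjoint triangles with $r=2$. Then $AT=TB=\frac rn\mathbf 1\mathbf 1^\top$, so $J_a(T;A,B)=0$, while $\ell(A,B)>0$ because the graphs are not isomorphic.

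For $\ell_b$ we need $A=TBT^\top$ exactly, so $A$ must itself be constant, and a weighted graph suffices since $\mathcal G=[0,1]^{n\times n}$. Take $A=c\,\mathbf 1\mathbf 1^\top$ with $c=\frac{1}{n^2}\mathbf 1^\top B\mathbf 1$ and $B$ any non-constant matrix, for instance a single edge. Then $J_b(T;A,B)=0$. However, any $PBP^\top$ keeps the entries of $B$, which are not all equal to $c$, so $\ell(A,B)>0$.

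I expect the main obstacle to be these constructions, in particular checking that $J_a$ and $J_b$ are meant literally with $T$ in place of $P$; the equivalence part is routine once Birkhoff--von Neumann is invoked.
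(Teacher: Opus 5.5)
Your proposal is correct and essentially reproduces the paper's proof. A Birkhoff--von Neumann decomposition of a zero-cost plan yields a permutation $P$ with $J(P;A,B)=0$. The uniform plan $\frac1n\mathbf 1\mathbf 1^\top$ then gives both counterexamples: with two non-isomorphic regular graphs ($C_6$ versus two triangles) for $\ell_a$, and with a constant $A$ equal to the mean entry of a non-constant $B$ for $\ell_b$. There are two small differences: you argue through supports instead of expanding $J$ bilinearly over the decomposition, and your $\ell_a$ counterexample holds for any admissible $d$ rather than only the squared loss the paper uses.
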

Finally, note that $J(T; A,B)$ can be efficiently computed: given
any Bregman divergence \(d\) the tensor product
\(J(T;A,B)\) admits a low-rank factorization \citep{peyre2016gromov}, so it can
be evaluated  in \(\mathcal{O}(n^3)\) rather than the
\(\mathcal{O}(n^4)\) of the naive tensor contraction. The question of finding the optimal matching $T^* = \argmin_{T \in \Pi_n} J(T;A,B)$ remains and we discuss two alterative approaches below.

\paragraph{Numerical GW solver.}
We now turn to computing the loss, which requires minimizing \eqref{eq:gw-relaxation}
over \(\Pi_n\) for every training pair, a step that must be fast and fully
parallelizable on GPU. This can be achieved with the mirror-descent scheme of
\cite{xu2019scalable, peyre2016gromov}: from an initial plan
\(T_0\), each of the \(K_{\mathrm{out}}\) outer iterations solves
\begin{equation}
    T_{k+1}
    =
    \arg\min_{T\in\Pi_n}
    \langle \nabla_T J(T_k;A,A^\star), T\rangle
    + \tau\, \KL(T\,\|\,T_k).
\end{equation}
Linearizing \(J\) and expanding the KL term recasts this as an entropic OT
problem,
\begin{equation}
    T_{k+1}
    =
    \arg\min_{T\in\Pi_n}
    \langle C_k, T\rangle - \tau\, H(T),
    \qquad
    C_k = \nabla_T J(T_k;A,A^\star) - \tau\log T_k,
\end{equation}
which we solve with \(K_{\mathrm{in}}\) Sinkhorn iterations,
\(T_{k+1} = \operatorname{Sinkhorn}(C_k;\, \tau,\, K_{\mathrm{in}})\). The full
nested scheme is controlled by three parameters
\((\tau, K_{\mathrm{in}}, K_{\mathrm{out}})\); we defer the Sinkhorn details and
complete pseudo-code to Appendix~\ref{app:sink}. Wrapping this solver in the
loss as $\operatorname{Solver}(A,A^\star)$ gives
\begin{equation}
    \mathcal{L}_{\mathrm{solver}}(\theta)
    =
    \sum_{i=1}^N J\big(T_i;\, f_\theta(x_i),\, A_i^\star\big),
    \qquad
    T_i = \sg\big[\mathrm{Solver}(f_\theta(x_i), A_i^\star)\big],
\end{equation}where $\sg[\cdot]$
denotes the stop-gradient operator, justified by the envelope theorem \cite{blondel2024elements}: at the optimum the objective is stationary in \(T_i\), so backpropagating through the solver is
unnecessary, i.e. \(\nabla_\theta \GW(f_\theta(x), A^\star) = \nabla_\theta J(\sg[T^\star]; f_\theta(x), A^\star)\), where $T^\star = \arg \min_{T\in \Pi_n} J(T; f_\theta(x), A^\star)$.

\begin{figure}[t]
    \centering
    \includegraphics[width=\linewidth]{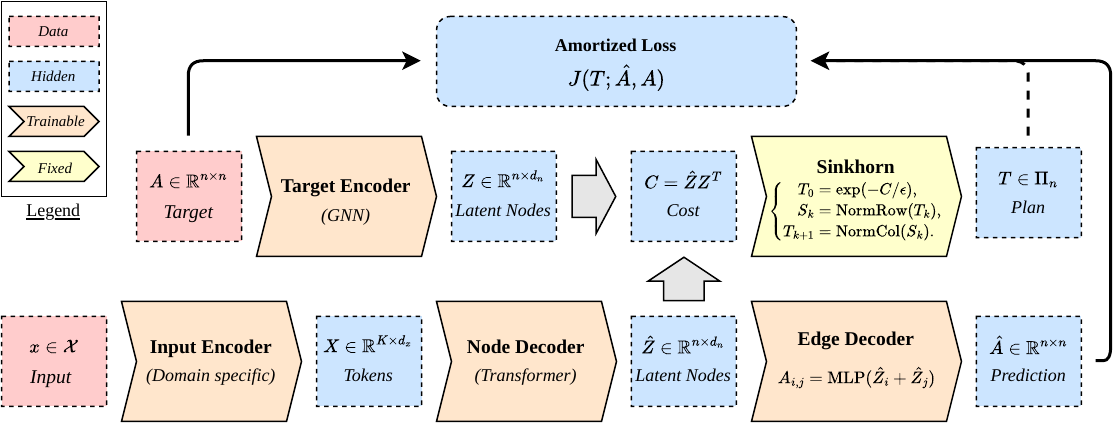}
    \caption{Amortized Supervised Graph Prediction Schema.}
    \label{fig:placeholder}
\end{figure}

\paragraph{Amortized Matcher.}
The solver above optimizes the matching objective directly but is costly: a
typical run, with \(100\)k steps at batch size \(128\), invokes it over
\(10^7\) times, so the inner Sinkhorn rollouts dominate  training time. Following the recent amortized matching introduced for graph autoencoding in
\citep{krzakala2025grale}, we replace the
per-step solver with a \emph{learned} matcher \(\mathrm{Matcher}_{\theta'}\)
that is trained jointly with \(f_\theta\) to predict the transport plan in a
single forward pass:
\begin{equation}
    \mathcal{L}_{\mathrm{matcher}}(\theta, \theta')
    =
    \sum_{i=1}^N J\big(T_i(\theta');\, f_\theta(x_i),\, A_i^\star\big)
    + \Omega(\theta'),
    \qquad
    T_i(\theta') = \mathrm{Matcher}_{\theta'}(x_i, A_i^\star).
    \label{eqloss}
\end{equation}
where $\Omega$ is regularization term discussed in equation~\eqref{eq:marginal_error}.
We parametrize the matcher with two embedding networks,
\(g_{\theta'}:\mathcal{X}\to\mathbb{R}^{n\times d_m}\) and
\(h_{\theta'}:\mathcal{G}\to\mathbb{R}^{n\times d_m}\), which map the input and
the target graph to node embeddings. Their inner product forms a rank-\(d_m\)
affinity matrix used for computing a matching with entropic regularized Optimal Transport:
\begin{equation}
    \mathrm{Matcher}_{\theta'}(x, A^\star)
    =
    \argmax_{T \in \Pi_n} \ \langle g_{\theta'}(x)\,h_{\theta'}(A^\star)^\top, T\rangle + \epsilon H(T),
    \label{eq:matcher_0}
\end{equation}
where the entropic smoothing parameter $\epsilon$ ensures the differentiability of the matcher. In practice, the plan is computed by the Sinkhorn Algorithm, 
\begin{equation}
    \mathrm{Matcher}_{\theta'}(x, A^\star)
    =
    \operatorname{Sinkhorn}\big(-g_{\theta'}(x)\,h_{\theta'}(A^\star)^\top;\ \epsilon,\ K_{\mathrm{m}}\big),
    \label{eq:matcher}
\end{equation}
where $K_{\mathrm{m}}$ is the number of steps of the algorithm. The matcher can be made equivariant to permutations of $A^*$ but this might not be desirable as we discuss in details in Appendix \ref{app:lpe}.

Two properties are desirable here: a small \(K_{\mathrm{m}}\), since each
Sinkhorn rollout is expensive, and a small \(\epsilon\), so that
\(T_i(\theta')\) is close to a permutation. These conflict: at small
\(\epsilon\), the few Sinkhorn iterations we can afford do not converge, meaning that the rows and columns sums of \(T_i(\theta')\) deviate from \(\mathbf{1}_n\). Rather than run Sinkhorn to convergence, we propose to
keep \(K_{\mathrm{m}}\) small and penalize these marginal deviations directly:
\begin{equation}
\label{eq:marginal_error}
    \Omega(\theta')
    =
    \sum_i
    \KL\big(T_i(\theta')\mathbf{1}_n \,\|\, \mathbf{1}_n\big)
    + \KL\big(T_i(\theta')^\top\mathbf{1}_n \,\|\, \mathbf{1}_n\big).
\end{equation}
This penalty connects to unbalanced optimal transport \citep{chizat2018scaling} which was already used in an amortized framework \cite{mazelet2026unsupervised}. While the penalty is the same, our intention differ as we do not aim to relax the marginal constraints but to prevent the model to use the low Sinkhorn iteration budget to escape the loss (e.g. with $T=0$).
The full amortized scheme is summarized in
Figure~\ref{fig:placeholder}.

\section{Numerical Experiments}

\subsection{Experimental setting.}

\paragraph{Datasets.} We evaluate on three tasks. \textbf{Coloring} \citep{krzakala2024any2graph} is a
synthetic benchmark of connected graphs labeled with a valid four-coloring where the input is an image representing the graph. We construct two molecular tasks. \textbf{Fingerprint2Molecule} recovers a molecular graph from its binary structural fingerprint, following \citep{krzakala2024any2graph}, with molecules drawn from PubChem \cite{kim2016pubchem}. \textbf{MS2Scaffold} predicts a molecular scaffold graph from a tandem mass spectrum, using the MassSpecGym benchmark \citep{bushuiev2024massspecgym} with the formula-based split recommended by \citep{krzakala2026msalign}. We propose the latter as an intermediate step toward \emph{de novo}
metabolite identification where the full molecule should be reconstructed which remains extremely challenging
\citep{bushuiev2024massspecgym}. For instance, even state-of-the-art models such as MetGenX \citep{wang2026structure} (\emph{Nature Comm.} 2026) reaches only \(2.50\%\) top-1 on the MassSpecGym benchmark \citep{bushuiev2024massspecgym}.\looseness=-1

\paragraph{Model architecture.} Following Any2Graph \cite{krzakala2024any2graph}, we parametrize the graph prediction model $f_\theta$ with three components:
\begin{itemize}
   \setlength\itemsep{0.3em}
   \setlength{\parskip}{0pt}
   \setlength{\topsep}{-5pt}
   \item A domain specific Input Encoder $E_{\theta_1}:\mathcal{X}\mapsto R^{K\times d}$ that encode the input with $K$ latent vectors. The model used for each dataset is detailed in Appendix \ref{app:exp}.
   \item A node decoder $D_{\theta_2}:R^{K\times d}\mapsto R^{n \times d}$ that predicts some latent representations of the target graph's nodes. This block is implemented with a Transformer \cite{vaswani2017attention}
   \item A graph prediction head $H_{\theta_3}:R^{n\times d}\mapsto \mathcal{G}$ parametrized with MLPs as in \cite{krzakala2024any2graph}.
\end{itemize}
Overall, the graph prediction model is $f_\theta = H_{\theta_3}\circ D_{\theta_2} \circ E_{\theta_1}$. Finally, the node embeddings used by the matcher \eqref{eq:matcher},  are defined as follows:
\begin{itemize}
   \setlength\itemsep{0.3em}
   \setlength{\parskip}{0pt}
   \setlength{\topsep}{-5pt}
   
   \item For the predicted graph node embeddings, we reuse the first part of  $f_\theta$ that is  $g_{\theta'}= D_{\theta_2'} \circ E_{\theta_1'}$ and we apply weight sharing $(\theta_1,\theta_2) = (\theta_1',\theta_2')$.
   \item The target graph node embeddings are extracted by a 3 layers graph neural network $h_{\theta'}$ \cite{xu2018powerful} with Laplacian Positionnal Encoding as discussed in Appendix \ref{app:lpe}.
\end{itemize}
 Throughout all experiments, the loss used for $d(a,b)$ is the cross-entropy loss. More details on the architecture and hyperparameters are provided in Appendix~\ref{app:exp}. Code is available at
\href{https://github.com/FedericoMendez/amortized-graph-prediction}{ \texttt{https://github.com/FedericoMendez/amortized-graph-prediction}\;\faGithub}.
\begin{table}[t]
\caption{Comparison of the different SGP strategies across the graph prediction tasks.}
\vspace{0.1cm}
\label{tab:main_results}
\footnotesize
\centering
\setlength{\tabcolsep}{4pt}
\begin{tabular}{l||cc|cc|cc|cc}
\toprule
& \multicolumn{2}{c|}{\textbf{Coloring 10}}
& \multicolumn{2}{c|}{\textbf{Coloring 20}}
& \multicolumn{2}{c|}{\textbf{MS2Scaffold}}
& \multicolumn{2}{c}{\textbf{Fingerprint2Mol}} \\
\textbf{Model}
& \textsc{Edit} $\downarrow$ & \textsc{GI Acc.} $\uparrow$
& \textsc{Edit} $\downarrow$ & \textsc{GI Acc.} $\uparrow$
& \textsc{Edit} $\downarrow$ & \textsc{GI Acc.} $\uparrow$
& \textsc{Edit} $\downarrow$ & \textsc{GI Acc.} $\uparrow$ \\
\midrule
\textsc{FGWBary}
 & 6.73 & 1.00
 & -- & --
 & -- & --
 & -- & -- \\
\textsc{Relationformer}
 & 5.47 & 18.14
 &  17.28 & 3.58
 & 32.72 & 0.00
 & 25.20 & 0.00 \\
\textsc{Any2Graph}
 & 2.82 & 31.79
 & 16.64 & 18.92
 & 19.11 & 5.61
 & 17.64 & 0.001 \\
\textsc{Any2Graph + Matcher}
 & \textbf{2.30} & \textbf{45.46}
 & \textbf{02.64} & \textbf{46.47}
 & \textbf{12.17} & \textbf{29.48}
 & \textbf{7.01} & \textbf{15.79} \\
\bottomrule
\end{tabular}
\vspace{-0.3cm}
\end{table}

\subsection{Results}

\begin{table}[t]
\caption{Effect of the continuous graph-matching relaxation used to train the
learned matcher. The Gromov-Wasserstein objective consistently yields the best
edit distance and exact-reconstruction accuracy.}
\label{tab:losses_comparisons}
\footnotesize
\centering
\setlength{\tabcolsep}{4pt}
\begin{tabular}{l||cc|cc|cc}
\toprule
& \multicolumn{2}{c|}{\textbf{Coloring 20}}
& \multicolumn{2}{c|}{\textbf{MS2Scaffold}}
& \multicolumn{2}{c}{\textbf{Fingerprint2Mol}} \\
\textbf{Loss}
& \textsc{Edit} $\downarrow$ & \textsc{GI Acc.} $\uparrow$
& \textsc{Edit} $\downarrow$ & \textsc{GI Acc.} $\uparrow$
& \textsc{Edit} $\downarrow$ & \textsc{GI Acc.} $\uparrow$ \\
\midrule
$J(T; f_\theta(x), A^*)$
& \textbf{2.64} & \textbf{46.47}
& \textbf{12.17} & \textbf{29.48}
& \textbf{7.01} & \textbf{15.79} \\
$J_a(T; f_\theta(x), A^*)$& 19.36 & 7.35
& 17.98 & 3.15
& 15.56 & 0.42 \\
$J_b(T; f_\theta(x), A^*)$ & 22.42 & 0.61
& 20.29 & 0.23
& 18.71 & 0.14 \\
 $J_a(T; A^*,  f_\theta(x))$ & 25.84 & 0.42 & 24.39 & 0.00 & 29.49 & 0.01 \\
$J_b(T; A^*, f_\theta(x)$ & 13.61 & 13.51 & 17.19 & 4.32 & 19.47 & 0.12 \\
\bottomrule
\end{tabular}
\vspace{-0.3cm}
\end{table}

\paragraph{Predictive performance.}
We report edit distance (\(\downarrow\)) and graph-isomorphism accuracy
(\textsc{GI Acc.}\(\uparrow\)), i.e.\ the fraction of exactly reconstructed
graphs, at the best validation epoch. We compare against three SGP baselines:
\textsc{FGWBary}~\cite{brogat2022learning},
\textsc{Relationformer}~\cite{shit2022relationformer}, and the original
\textsc{Any2Graph}~\cite{krzakala2024any2graph}, without the learnable matcher.
Table~\ref{tab:main_results} shows that adding the matcher improves every task.
The gain is especially pronounced on the molecular benchmarks: relative to
\textsc{Any2Graph}, it reduces edit distance from \(19.11\) to \(12.17\) on
\textsc{MS2Scaffold} and from \(17.64\) to \(7.01\) on
\textsc{Fingerprint2Mol}, while increasing exact-reconstruction accuracy from
\(5.61\%\) to \(29.48\%\) and from \(0.001\%\) to \(15.79\%\), respectively.

\begin{figure}[t!]
    \centering
    \includegraphics[width=0.48\textwidth]{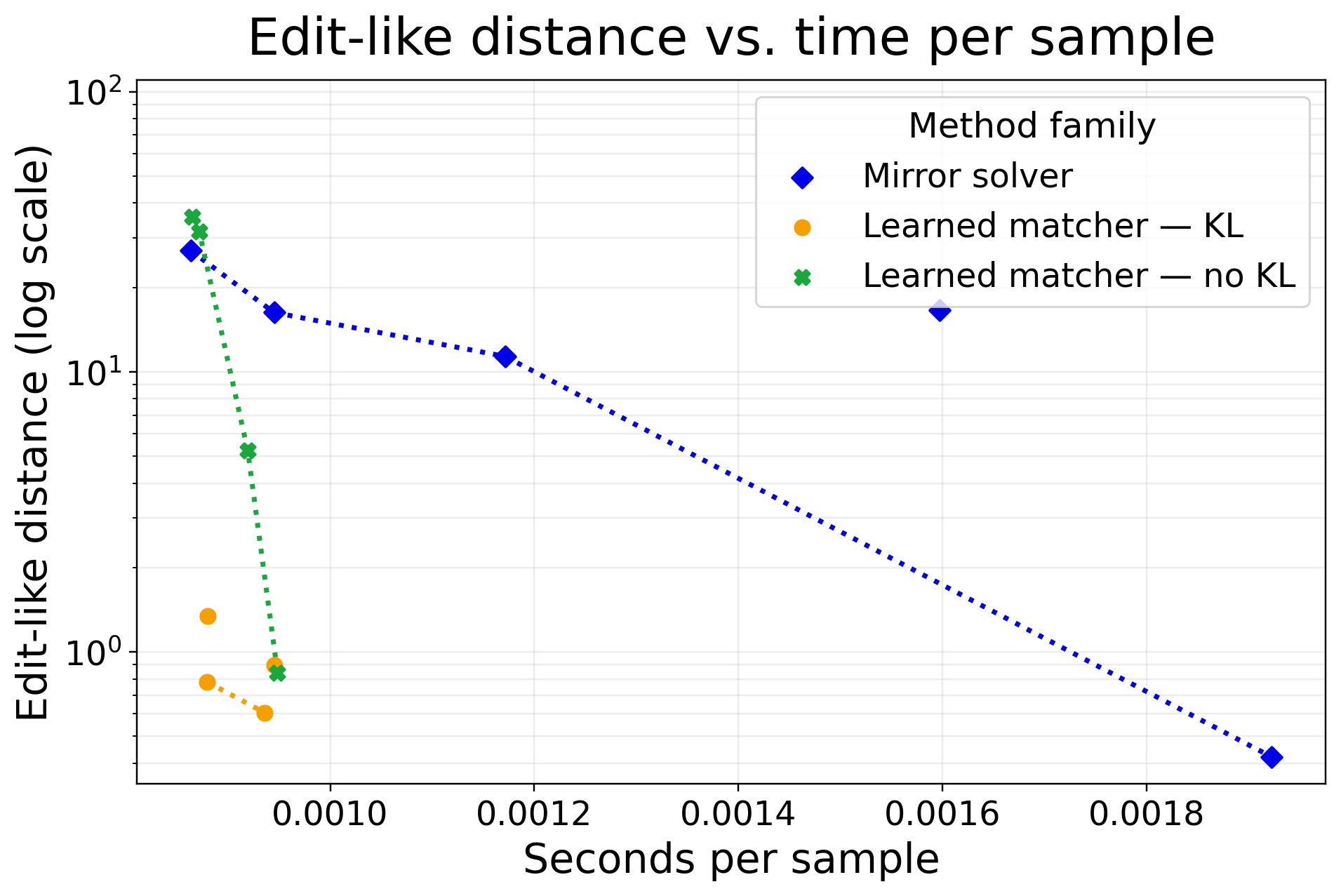}
    \hfill
    \includegraphics[width=0.48\textwidth]{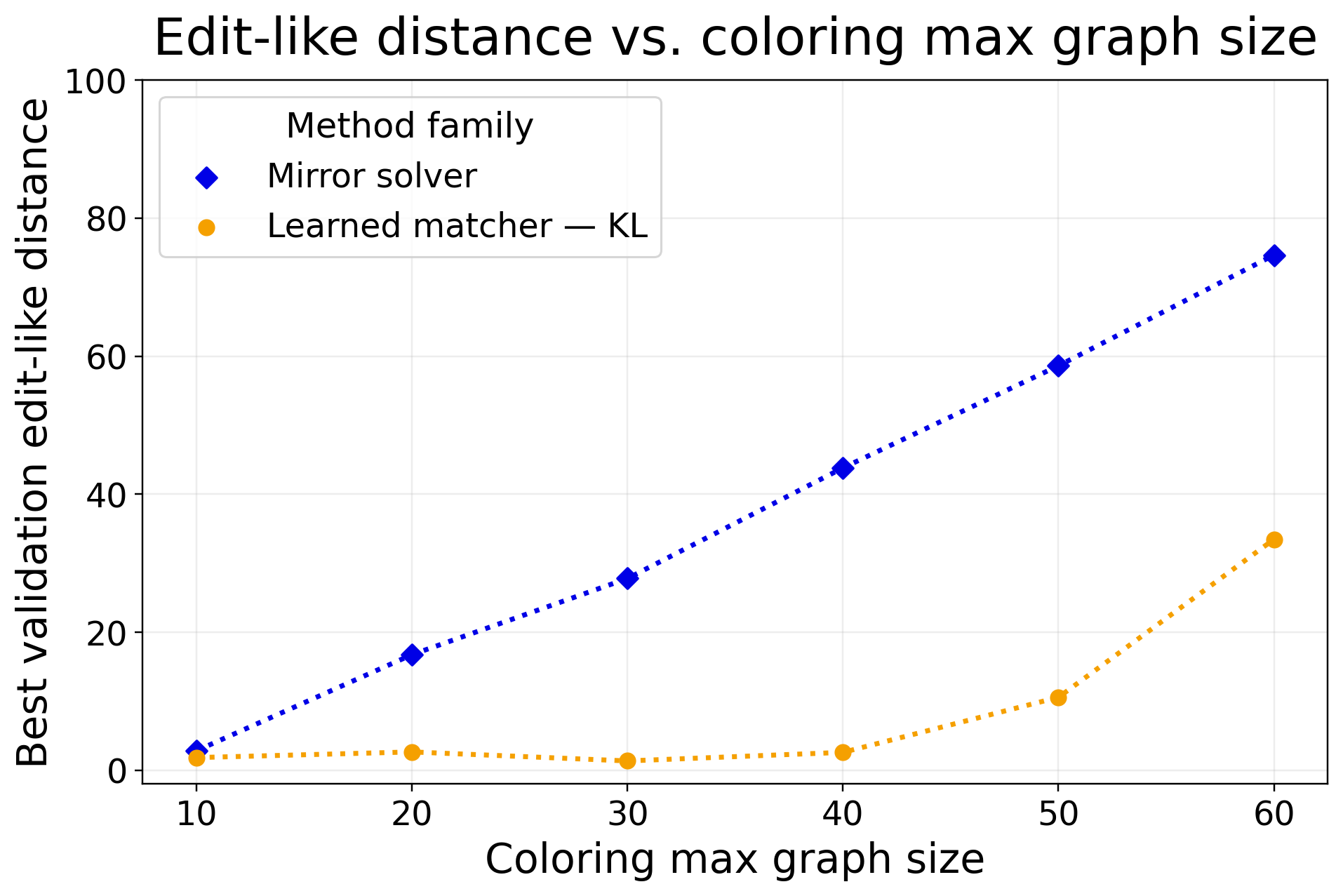}
    \caption{Left: We report the prediction performances (Edit Distance $\downarrow$) against the model throughput (Seconds/Sample $\downarrow$) for a variety of models and hyperparameter choices. Right: We compare the performances (Edit Distance $\downarrow$) of the matcher- and solver-based approach for datasets of increasing maximum graph size. The COLORING synthetic benchmark is used in both figures \cite{krzakala2024any2graph}.}
    \label{fig:main}
    \vspace{-0.2cm}
\end{figure}

\paragraph{Efficiency and scaling.}
Figure~\ref{fig:main} compares the learned matcher with the numerical mirror
solver on \textsc{Coloring}. In the left panel, the KL-regularized matcher forms
the favorable accuracy-runtime frontier: at comparable per-sample cost it
achieves a markedly smaller edit-like distance, while the solver attains a
slightly lower distance only at approximately twice the latency. Removing the
KL penalty worsens this trade-off. The right plot shows that the advantage
widens with graph size: the solver's error rises steadily, whereas the learned
matcher remains accurate through graphs of size \(40\) and retains a large
advantage at sizes \(50\) and \(60\).

\paragraph{Choice of relaxation.} We now return to the first question raised in
Section~\ref{sec:efficient}: \emph{which relaxation yields the best loss
function?} Our theoretical results identified \(\GW\) as a tighter and more
faithful surrogate than \(\ell_a\) or \(\ell_b\), and
Table~\ref{tab:losses_comparisons} confirms this empirically. Replacing \(\GW\)
with either alternative degrades both metrics on all three benchmarks. The
performance gap between \(\GW\) and the two alternative relaxations is not
limited to exact reconstruction accuracy: both \(\ell_a\) and \(\ell_b\) also
produce substantially larger edit distances. This suggests that these weaker
relaxations provide a less informative learning signal even when predictions
are not exactly correct.

\section{Conclusion}
We introduced a learned matcher that amortizes graph alignment in supervised
graph prediction, avoiding an iterative solver for every training sample. Our
theoretical and empirical results support the Gromov-Wasserstein objective over
the alternative bistochastic relaxations considered. Across synthetic and molecular tasks,
the proposed approach improves reconstruction performance, provides a favorable
accuracy-runtime trade-off, and scales better with graph size. These results
show that amortized alignment is an effective way to reduce the computational
cost of optimal transport based graph prediction.

\section*{Acknowledgments and Disclosure of Funding}
The study was funded by French National Research Agency (ANR) through PEPR IA FOUNDRY (ANR-23-PEIA-0003), e-Lucid (ANR-25-
TSIA-0002-01), the France 2030 program under the MacLeOD project (ANR-25-PEIA-0005), and from the European Union’s Horizon Europe research and innovation programme under grant agreement 101120237 (ELIAS). This work benefited from Hi! PARIS and State funding managed by the French National Research Agency (ANR) under the France 2030 program, reference ANR-23-IACL-0005. 
The second and third authors received PhD scholarships from Institut Polytechnique de Paris.
\newpage
\FloatBarrier
\bibliographystyle{unsrt} 
\bibliography{references}
\newpage
\appendix
\section{Algorithms}
\label{app:sink}

\subsection{Reminder of Sinkhorn}
Given a cost matrix \(C\in\mathbb{R}^{n\times n}\) and regularization
\(\tau>0\), the Sinkhorn algorithm \citep{cuturi2013sinkhorn} solves the
entropy-regularized OT problem
\(\min_{T\in\Pi_n}\langle C,T\rangle - \tau H(T)\)
by alternating marginal projections on the Gibbs kernel \(K=\exp(-C/\tau)\). Each iteration costs \(\mathcal{O}(n^2)\) and
uses only matrix and vector products, making it fully parallelizable on GPU an differentiable by unrolling \cite{genevay2018learning}. Note that in the original Sinkhorn algorithm one has to make an arbitrary choice: what is the marginal that will get exactly enforced in the last step, row or column ? To remove this ambiguity we add a final half step that average the last 2 updates (Algorithm~\ref{alg:sinkhorn}).

\begin{algorithm}[H]
\caption{\( \operatorname{Sinkhorn}(C;\tau,K_{\mathrm{in}})\)}
\label{alg:sinkhorn}
\begin{algorithmic}[1]
\State \(K \gets \exp(-C/\tau)\), \quad \(v \gets \mathbf{1}_n\)
\For{\(k = 1\) to \(K_{\mathrm{in}}\)}
    \State \(u \gets \mathbf{1}_n \,/\, (K v)\) \Comment{Row normalization}
    \State \(v \gets \mathbf{1}_n \,/\, (K^\top u)\) \Comment{Column normalization}
\EndFor
\State \(u' \gets \mathbf{1}_n \,/\, (K v)\) \Comment{Extra Row normalization}
\State \(u \gets \frac{u+u'}{2}\) \Comment{Average the last two steps}
\State \Return \(T = \operatorname{diag}(u)\,K\,\operatorname{diag}(v)\)
\end{algorithmic}
\end{algorithm}

\subsection{Mirror Descent Solver}
Recall that \(J(T;A,B) = \sum_{i,j,k,l} d(A_{ik},B_{jl})\, T_{ij}T_{kl}\) is the
quadratic matching cost between adjacency matrices \(A\) and \(B\). In Algorithm \ref{alg:solver}, we propose to use mirror descent to find the optimal transport $T^* = \argmin_{T\in \Pi_n} J(T;A,B)$ \cite{nemirovskij1983problem}. By leveraging the Kullback-Leibler geometry as proposed in \cite{xu2019scalable}, each iterations rewrites as an entropic-regularized OT problem which can we solved with the Sinkhorn algorithm.

\begin{algorithm}[H]
\caption{Mirror-descent \(\mathrm{Solver}(A, A^\star)\)}
\label{alg:solver}
\begin{algorithmic}[1]
\Require adjacency matrices \(A, A^\star\); parameters \((\tau, K_{\mathrm{in}}, K_{\mathrm{out}})\)
\State initialize \(T_0 \gets \tfrac{1}{n}\mathbf{1}_n\mathbf{1}_n^\top\) \Comment{uniform plan in \(\Pi_n\)}
\For{\(k = 0\) to \(K_{\mathrm{out}}-1\)}
    \State \(C_k \gets \nabla_T J(T_k; A, A^\star) - \tau \log T_k\) \Comment{\(\mathcal{O}(n^3)\) via factorization \cite{krzakala2024any2graph}}
    \State \(T_{k+1} \gets \operatorname{Sinkhorn}(C_k;\, \tau,\, K_{\mathrm{in}})\)
\EndFor
\State \Return \(T_{K_{\mathrm{out}}}\)
\end{algorithmic}
\end{algorithm}

\subsection{Amortized Implementation}
In the proposed amortized implementation, the naive loss
\begin{equation}
    \mathcal{L}_{\mathrm{naive}}(\theta) =
    \sum_i \min_{T\in \Pi_n} J\big(T;\, f_\theta(x_i),\, A_i^\star\big),
\end{equation}
is replaced by 
\begin{equation}
    \mathcal{L}_{\mathrm{matcher}}(\theta, \theta') =
    \sum_i J\big(T_i(\theta');\, f_\theta(x_i),\, A_i^\star\big)
    + \Omega(\theta'),
\end{equation}
where $T_i(\theta') = \operatorname{Sinkhorn}\big(g_{\theta'}(x)\,h_{\theta'}(A^\star)^\top;\ \epsilon,\ K_{\mathrm{in}}\big).$ is the predicted matching, parametrized as the optimal (entropic) matching between some learnable node features and $\Omega(\theta')$ is the marginal violation defined in \ref{eq:marginal_error}. In practice, we recommend that the target predictor $f_\theta$ and the node target predictor $g_\theta'$ share a common backbone $\phi_\theta$. This approach is summarized in Algorithm \ref{alg:amortized}.

\begin{algorithm}[H]
\caption{Amortized training step (matcher \(\theta'\) + predictor \(\theta\))}
\label{alg:amortized}
\begin{algorithmic}[1]
\Require batch \(\{(x_i, A_i^\star)\}_{i=1}^B\); parameters \((\epsilon, K_{\mathrm{in}})\); networks \(g_{\theta'}, h_{\theta'}, f_\theta\)
\For{\(i = 1\) to \(B\)}
    \State \(\hat{Z}_i \gets \phi_\theta(x_i)\) \Comment{backbone}
    \State \(\hat{A}_i \gets f_\theta(\hat{Z}_i)\) \Comment{predicted graph}
    \State \(C_i \gets g_{\theta'}(\hat{Z}_i)\, h_{\theta'}(A_i^\star)^\top\) \Comment{rank-\(d_m\) matching cost}
    \State \(T_i \gets \operatorname{Sinkhorn}(C_i;\, \epsilon,\, K_{\mathrm{in}})\) 
    \Comment{predicted plan}
    \State \(\Omega_i \gets KL\big(T_i\mathbf{1}_n \,\|\, \mathbf{1}_n\big) + KL\big(T_i^T\mathbf{1}_n \,\|\, \mathbf{1}_n\big)  \)  
    \Comment{marginal penalization}
\EndFor
\State \(\mathcal{L}_{\mathrm{matcher}} \gets \sum_i J(T_i; \hat{A}_i, A_i^\star) + \Omega_i\)
\State update \(\theta, \theta'\) by a gradient step on \(\mathcal{L}_{\mathrm{matcher}}\) \Comment{backprop through both \(J\) and \(\operatorname{Sinkhorn}\)}
\State \Return updated \(\theta, \theta'\)
\end{algorithmic}
\end{algorithm}
\newpage
\section{Technical details}

\subsection{Extensions to labeled graphs of arbitrary sizes}
\label{extension_labeled}
For completeness, we explain how the framework extends to variable-size graphs
with node and edge features, following Any2Graph~\citep{krzakala2024any2graph}.
Let a prediction be
$\hat y=(\hat h,\hat F,\hat A)$, where $\hat h\in[0,1]^n$ is a soft node mask
(equivalently, a vector of node-existence probabilities), $\hat F_i$ is the feature of node $i$, and $\hat A$ is the predicted
adjacency matrix.  The target
$y^\star=(h^\star,F^\star,A^\star)$ is padded to the same maximum size $n$;
$h_j^\star=1$ exactly for its $m=\lVert h^\star\rVert_1$ genuine nodes.  The
Partially-Masked Fused Gromov-Wasserstein loss is

\begin{align}
 \operatorname{PMFGW}(\hat y,y^\star)
 =\min_{T\in\Pi_n}\;&
 \frac{\alpha_h}{n}\sum_{i,j}T_{ij}
 d_h(\hat h_i,h_j^\star)
 +\frac{\alpha_F}{m}\sum_{i,j}T_{ij}h_j^\star
 d_F(\hat F_i,F_j^\star) \notag\\
 &+\frac{\alpha_A}{m^2}\sum_{i,j,k,l}T_{ij}T_{kl}
 h_j^\star h_l^\star
 d_A(\hat A_{ik},A_{jl}^\star).
 \label{eq:pmfgw-attributed}
\end{align}
The first term learns the graph size, the second aligns node features, and the
third aligns the edges. The weights
$\alpha=(\alpha_h,\alpha_F,\alpha_A)$ control the relative importance
of these three objectives. For a plain fixed-size unlabeled graph, the third term is exactly
the GW objective used in the main text (and the first two terms may be omitted). For an edge-attributed graph extension, we refer to \citep{yang2024exploiting}.

\subsection{Equivariant matcher or symmetry breaking matcher}
\label{app:lpe}

Recall the definition of the matcher:
\begin{equation}
    \mathrm{Matcher}_{\theta'}(x, A^\star)
    =
    \argmax_{T \in \Pi_n} \ \langle g_{\theta'}(x)\,h_{\theta'}(A^\star)^\top, T\rangle + \epsilon H(T),
    \label{eq:matcher_0}
\end{equation}
where \(g_{\theta'}:\mathcal{X}\to\mathbb{R}^{n\times d_m}\) and
\(h_{\theta'}:\mathcal{G}\to\mathbb{R}^{n\times d_m}\). The matcher is expected to approximate the optimal transport plan between the prediction $f(x)$ and the target $A^\star$. Denoting:
\begin{equation}
    T(f(x),A^\star) = \argmin_{T\in \Pi_n} J(T; f(x), A^\star),
\end{equation}
we expect 
\begin{equation}
    \mathrm{Matcher}_{\theta'}(x, A^\star) \approx T(f(x),A^\star).
\end{equation}
Interestingly, the optimal transport plan is \emph{permutation equivariant}, formally:
\begin{equation}
    \forall P \in \Sigma_n, \quad T(f(x),PA^\star P^T) = T(f(x),A^\star)P^T.
\end{equation}
Consequently, it might seem natural to enforce a similar property in the matcher. This is easy to achieve: it suffices that $h_{\theta'}$ be permutation equivariant for the matcher to inherit the property, as stated in the following proposition.

\begin{proposition}[Permutation Equivariant Matcher]
If the target node encoder $h_{\theta'}$ is permutation equivariant, 
\begin{equation}
    \forall P \in \Sigma_n, \quad h_{\theta'}(PA^\star P^T) = Ph_{\theta'}(A^\star),
\end{equation}
Then, the matcher is permutation equivariant,
\begin{equation}
    \forall P \in \Sigma_n, \quad \mathrm{Matcher}_{\theta'}(f(x),PA^\star P^T) = \mathrm{Matcher}_{\theta'}(f(x),A^\star)P^T.
\end{equation}
\end{proposition}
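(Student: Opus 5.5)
The plan is to push the permutation through the score matrix and then use uniqueness of the entropic optimum together with the invariance of $\Pi_n$ and of $H$ under column permutations. (We write $\mathrm{Matcher}_{\theta'}(f(x),\cdot)$ as in the statement, meaning the matcher built from $g_{\theta'}(x)$.) Write $M(A^\star)=g_{\theta'}(x)\,h_{\theta'}(A^\star)^\top$ for the affinity matrix. By the equivariance hypothesis on $h_{\theta'}$,
\[
M(PA^\star P^\top)=g_{\theta'}(x)\,\bigl(Ph_{\theta'}(A^\star)\bigr)^\top=M(A^\star)P^\top .
\]
So the score matrix for the permuted target is the original one with its columns permuted, and the question reduces to how the entropic OT problem \eqref{eq:matcher_0} reacts to a column permutation of its score matrix.

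Next, fix $P\in\Sigma_n$ and use the change of variables $T\mapsto TP^\top$. This map is a bijection of $\Pi_n$ onto itself. Indeed, $TP^\top\mathbf{1}_n=T\mathbf{1}_n=\mathbf{1}_n$ and $(TP^\top)^\top\mathbf{1}_n=PT^\top\mathbf{1}_n=P\mathbf{1}_n=\mathbf{1}_n$, entries stay in $[0,1]$, and the inverse map is $T\mapsto TP$. The entropy $H(T)=-\sum_{ij}T_{ij}\log T_{ij}$ (up to the paper's convention) depends only on the multiset of entries, so $H(TP^\top)=H(T)$. For the linear term,
\[
\langle M P^\top, TP^\top\rangle=\operatorname{tr}\bigl(PM^\top TP^\top\bigr)=\operatorname{tr}(M^\top T)=\langle M,T\rangle .
\]
Hence the objective for score $MP^\top$ evaluated at $TP^\top$ equals the objective for score $M$ evaluated at $T$. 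It follows that the maximizers correspond: $T^\star$ maximizes the original problem if and only if $T^\star P^\top$ maximizes the permuted one.

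The remaining step, and the one I expect to need care, is well-definedness of the $\argmax$. For $\epsilon>0$ the objective is strictly concave on the compact convex set $\Pi_n$, so the maximizer is unique, and the correspondence gives
\[
\mathrm{Matcher}_{\theta'}(f(x),PA^\star P^\top)=\mathrm{Matcher}_{\theta'}(f(x),A^\star)P^\top .
\]
If one instead takes the matcher to be the finite-$K_{\mathrm m}$ Sinkhorn output \eqref{eq:matcher}, I would add a short induction. Start from the Gibbs kernel $K'=KP^\top$ and $v'=Pv$ (true at initialization, since $v=\mathbf{1}_n$). Then $K'v'=Kv$, so $u'=u$, and $K'^\top u'=PK^\top u$, so $v'=Pv$ again. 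The averaging half-step preserves these relations, and the output satisfies $\operatorname{diag}(u)KP^\top\operatorname{diag}(Pv)=\operatorname{diag}(u)K\operatorname{diag}(v)P^\top$. So equivariance holds exactly at every iteration count, not only at convergence.
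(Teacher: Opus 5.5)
Your proof is correct and follows the same route as the paper. You push the permutation into the affinity matrix through the equivariance of $h_{\theta'}$, then change variables in the $\argmax$ via $T\mapsto TP^\top$ (the paper writes $T'=TP$), which preserves $\Pi_n$, the linear term and $H$. Your two additions are valid and tighten the paper's sketch: the uniqueness of the entropic maximizer, which the paper uses implicitly by treating the $\argmax$ as single-valued, and the induction showing exact equivariance of the truncated $K_{\mathrm m}$-step Sinkhorn output that is actually used in practice.
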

\begin{proof}
Assume that $h_{\theta'}$ is permutation equivariant. Then, for any $P \in \Sigma_n$, we have 
\begin{align*}
    \mathrm{Matcher}_{\theta'}(f(x),PA^\star P^T) &= \argmax_{T \in \Pi_n} \ \langle g_{\theta'}(x)\,h_{\theta'}(PA^\star P^T)^\top, T\rangle + \epsilon H(T) \\
    &= \argmax_{T \in \Pi_n} \ \langle g_{\theta'}(x)\,h_{\theta'}(A^\star )^\top P^T, T\rangle + \epsilon H(T)\\
    &= \argmax_{T \in \Pi_n} \ \langle g_{\theta'}(x)\,h_{\theta'}(A^\star )^\top , TP\rangle + \epsilon H(TP)
\end{align*}
where we used that $H(T) = H(TP)$. The change of variable $T'=TP$ concludes the proof.
\end{proof}
Yet, previous works have highlighted that permutation equivariant models might be limited in the kind of mapping they can perform \cite{ICLR2025_c7138635}. In the context of the matcher, this was a significant limitation in previous works (see Appendix B.3 from GRALE \cite{krzakala2025grale}). Formally, these limitations arise when the automorphism group of $A^*$ is non-trivial as detailled in the next proposition.
\begin{proposition}[Limitations of a permutation invariant matcher]
\label{prop:automorphism_matcher}
Assume that $P\in \Sigma_n$ is the automorphism group of $A^*$ i.e. $PA^*P^T=A^*$. Then, for any permutation equivariant matcher we have,
\begin{equation}
    \forall x\in\mathcal{X}, \quad \mathrm{Matcher}_{\theta'}(x, A^\star)P^T
    =
    \mathrm{Matcher}_{\theta'}(x, A^\star)
    \label{eq:matcher_permutation}
\end{equation}
In particular, if $A^*$ is vertex-transitive, the output of the matcher is trivial,
\begin{equation}
    \forall x\in\mathcal{X}, \quad \mathrm{Matcher}_{\theta'}(x, A^\star)
    =
    \frac{1}{n}\mathbf{1}\mathbf{1}^T
\end{equation}
This would apply for instance, to the Benzene molecule in Figure~\ref{fig:examples}.
\end{proposition}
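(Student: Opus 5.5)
The first claim follows directly from the Permutation Equivariant Matcher proposition above. If $P$ is an automorphism of $A^\star$, i.e.\ $PA^\star P^T=A^\star$, then equivariance gives
\[
\mathrm{Matcher}_{\theta'}(x,A^\star)=\mathrm{Matcher}_{\theta'}(x,PA^\star P^T)=\mathrm{Matcher}_{\theta'}(x,A^\star)P^T .
\]
This is exactly \eqref{eq:matcher_permutation}. The statement's phrase ``$P$ is the automorphism group'' should be read as ``$P$ belongs to the automorphism group'', so the identity holds for every automorphism $P$.

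For the vertex-transitive case, write $T=\mathrm{Matcher}_{\theta'}(x,A^\star)$. The identity $T=TP^T$ means $T_{ij}=T_{i\sigma(j)}$ for all $i,j$, where $\sigma$ is the vertex permutation associated with $P$ (up to the transpose convention, which is irrelevant since the automorphism group is closed under inverses). Vertex-transitivity means that for any $j,j'$ some automorphism sends $j$ to $j'$. Hence each row of $T$ is constant: $T_{ij}=c_i$ for all $j$.

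Next, $T\in\Pi_n$ forces the row sums to equal $1$, so $nc_i=1$ and $c_i=1/n$. This gives $T=\frac1n\mathbf 1\mathbf 1^T$. Column sums are then automatically $1$, so the result is consistent.

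The main point to handle carefully is that the matcher is defined in \eqref{eq:matcher_0} as an exact argmax over $\Pi_n$, and strict concavity of $\epsilon H$ makes it unique and bistochastic. I will use this definition rather than the truncated Sinkhorn output. If one instead wants the statement for the Sinkhorn output with finitely many iterations, the row-constancy argument still works: each Sinkhorn iterate is equivariant because $K=\exp(-C/\epsilon)$ transforms as $KP^T$ and the scalings commute with this. Column sums being $1$ is then not guaranteed, so I would state the conclusion as rows being constant and mention it only as a remark.
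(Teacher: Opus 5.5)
Your proof is correct and follows the paper's argument. Equivariance together with $PA^\star P^\top=A^\star$ gives $T=TP^\top$; vertex-transitivity then makes all columns of $T$ identical (equivalently, each row constant); and the row-sum constraint $T\mathbf{1}_n=\mathbf{1}_n$ forces every entry to equal $1/n$, while your explicit use of the exact entropic argmax definition (so that $T\in\Pi_n$) just makes visible the bistochasticity the paper's proof assumes implicitly.
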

\begin{proof}
The first equation is a direct application of the permutation equivariance assumption. If we assume that $A^*$ is vertex-transitive, then for any $i,j \in [1,n]$, there exists $P \in \Sigma_n$ such that 
$$PA^*P^T = A^* \quad \text{and} \quad P_{i,j} =1$$
Applying \eqref{eq:matcher_permutation} we get in particular that the i-th and j-th columns of $T=\mathrm{Matcher}_{\theta'}(x, A^\star)$ are identical. Therefore all the columns of $T$ are identical. Finally the marginal condition $T \mathbf{1}_n = \mathbf{1}_n $ implies that columns are constant to $\frac{1}{n}\mathbf{1}_n$ which concludes the proof.
\end{proof}
Proposition \ref{prop:automorphism_matcher} shows that a permutation equivariant matcher is unable to differentiate the symmetric nodes of a graph. Thus, if the automorphism group of $A^*$ is not trivial, the matcher is forced to output a fuzzy one-to-many matching matrix.

We address this limitation by injecting some symmetry breaking Laplacian Positionnal Encoding (LPE)\footnote{LPE is not exactly equivariant due to sign and repeated eigenvalues ambiguities. These ties are broken arbitrarily which breaks the symmetries of the graph.} to the graph neural network $h_{\theta'}$, future work could consider more subtle positionnal encoding such that proposed in \cite{ICLR2025_c7138635}.

\newpage
\section{Proofs}
\label{app:proofs}

\equivmatching*
\begin{proof}
Fix \(P\in\Sigma_n\). Since \(P\) is a permutation matrix, each of its rows and
columns contains exactly one entry equal to \(1\) and all others equal to \(0\).
Consequently, for any vector \(x\in\mathbb{R}^n\) and any index \(l\),
\begin{equation}
    \sum_{k=1}^n P_{kl}\, x_k = x_{\pi(l)},
    \label{eq:onehot}
\end{equation}
where \(\pi\) is the permutation associated with \(P\) (i.e.\ \(P_{kl}=1\) iff
\(k=\pi(l)\)). In particular, applying \eqref{eq:onehot} entrywise inside the
cost \(d\) is exact, since only one term of the sum is nonzero:
\begin{equation}
    \sum_{k=1}^n P_{kl}\, d(A_{ik}, B_{jl})
    = d\big([AP]_{il},\, B_{jl}\big).
    \label{eq:collapse}
\end{equation}

\emph{Equivalence \(J = J_a\).}
Starting from \eqref{eq:qap} and summing over \(k\) using \eqref{eq:collapse},
then over \(j\) in the same way,
\begin{align*}
    J(P;A,B)
    &= \sum_{i,j,l} \Big(\sum_k P_{kl}\, d(A_{ik},B_{jl})\Big) P_{ij}
     = \sum_{i,j,l} d\big([AP]_{il}, B_{jl}\big)\, P_{ij} \\
    &= \sum_{i,l} d\big([AP]_{il},\, [PB]_{il}\big)
     = J_a(P;A,B),
\end{align*}
where the third equality collapses the sum over \(j\) exactly as in
\eqref{eq:collapse}, now acting on the second argument of \(d\).

\emph{Equivalence \(J = J_b\).}
Applying \eqref{eq:onehot} in both the \(j\) and \(l\) sums of \eqref{eq:qap}
simultaneously,
\[
    J(P;A,B)
    = \sum_{i,k} d\big(A_{ik},\, [PBP^\top]_{ik}\big)
    = J_b(P;A,B).
\]

Both reductions rely only on the one-hot structure of \(P\), so they hold for
every \(P\in\Sigma_n\).
\end{proof}

\begin{proposition}[Non-equivalent bistochastic objectives.] \label{prop:not-equivalent}Assume that $d(a,b)=(a-b)^2$. Then for any $T\in\Pi_n$, we have
\begin{align*}
J(T;A,B) &= \|A\|_F^2 + \|B\|_F^2 - 2 \langle AT, TB \rangle.\\
J_a(T;A,B) &= \|AT\|_F^2 + \|TB\|_F^2 - 2 \langle AT, TB \rangle.\\
J_b(T;A,B) &= \|A\|_F^2 + \|TBT^\top\|_F^2 - 2 \langle AT, TB \rangle.
\end{align*}
In particular, there exist $A,B \in [0,1]^{n\times n}$ and $T\in\Pi_n$ such that:
\begin{align*}
J(T;A,B) = 2 \quad
J_b(T;A,B) = 1\quad
J_a(T;A,B) = 0.
\end{align*}
\end{proposition}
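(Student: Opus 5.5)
The plan is to expand the square $d(a,b)=a^2+b^2-2ab$ inside each objective and use the marginal constraints $T\mathbf{1}_n=\mathbf{1}_n$, $T^\top\mathbf{1}_n=\mathbf{1}_n$ to simplify. After that, a small explicit example should give the required values.

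\emph{Formula for $J$.} Splitting $J(T;A,B)=\sum_{i,j,k,l}(A_{ik}-B_{jl})^2T_{ij}T_{kl}$ gives three sums.

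In the first sum, $\sum_{i,j,k,l}A_{ik}^2T_{ij}T_{kl}$, I sum over $j$ and $l$ first. Both are row sums of $T$ and equal $1$, so the sum reduces to $\|A\|_F^2$.

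In the second sum, $\sum B_{jl}^2T_{ij}T_{kl}$, I sum over $i$ and $k$ first. These are column sums of $T$, so it reduces to $\|B\|_F^2$.

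For the cross term I group
$$\sum_{i,j,k,l}A_{ik}B_{jl}T_{ij}T_{kl}=\sum_{i,l}\Big(\sum_k A_{ik}T_{kl}\Big)\Big(\sum_j T_{ij}B_{jl}\Big)=\langle AT,TB\rangle.$$

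\emph{Formula for $J_a$.} Expanding $\sum_{i,l}([AT]_{il}-[TB]_{il})^2$ gives the stated identity immediately; no marginal constraint is needed.

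\emph{Formula for $J_b$.} Expanding $\sum_{i,k}(A_{ik}-[TBT^\top]_{ik})^2$ gives $\|A\|_F^2+\|TBT^\top\|_F^2-2\langle A,TBT^\top\rangle$. It remains to check that $\langle A,TBT^\top\rangle=\langle AT,TB\rangle$. This follows from cyclicity of the trace:
$$\operatorname{tr}(A^\top TBT^\top)=\operatorname{tr}\big((AT)^\top TB\big).$$

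\emph{Counterexample.} I take $n=2$, $T=\tfrac12\mathbf{1}\mathbf{1}^\top\in\Pi_2$, $A=I_2$, and $B=\begin{pmatrix}0&1\\1&0\end{pmatrix}$. Every row and column sum of $A$ and of $B$ equals $1$, so
$$AT=TB=\tfrac12\mathbf{1}\mathbf{1}^\top,\qquad TBT^\top=\tfrac12\mathbf{1}\mathbf{1}^\top .$$
This gives $\langle AT,TB\rangle=1$, $\|AT\|_F^2=\|TB\|_F^2=\|TBT^\top\|_F^2=1$, and $\|A\|_F^2=\|B\|_F^2=2$. Substituting into the three formulas:
$$J=2+2-2=2,\qquad J_b=2+1-2=1,\qquad J_a=1+1-2=0.$$

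The only point needing care is the index bookkeeping: row sums are used for the $A$ term and column sums for the $B$ term. Once the three identities hold, the example is a direct check.
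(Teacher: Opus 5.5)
Your proof is correct and follows the same route as the paper: expand the square, use the row and column sums of $T$ for the $J$ identity, then check a $2\times 2$ example with the uniform plan $T=\tfrac12\mathbf{1}\mathbf{1}^\top$. The paper takes $A=B=I_2$ where you take $B$ to be the swap matrix, which gives the same values. You are more explicit than the paper, which says only that the $J_a$ and $J_b$ identities follow by similar computations; your trace-cyclicity step for $J_b$, and your remark that these two identities need no marginal constraints, fill that gap.
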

\begin{proof}
The provide the proof for the matrix formulation of $J(T;A,B)$, the two other cases follows from similar computations. Starting from the definition:
\begin{align*}
    &J(T;A,B)
    =
    \sum_{i,j,k,l=1}^n
    (A_{ik} - B_{jl})^2 T_{ij}T_{kl}\\
    &= \sum_{i,k}^n A_{ik}^2 \left(\sum_{jl}^n T_{ij}T_{kl}\right)
    + \sum_{jl}^n B_{jl}^2 \left(\sum_{i,k}^n T_{ij}T_{kl}\right)    -2\sum_{i,l=1}^n \left(\sum_{k=1}^n A_{ik} T_{kl} \right) \left(\sum_{j=1}^n T_{ij} B_{jl}\right)\\
    &= \sum_{i,k}^n A_{ik}^2
    + \sum_{jl}^n B_{jl}^2     -2 \sum_{i,l=1}^n [AT]_{il} [TB]_{il}\\
    &= \|A\|_F^2 + \|B\|_F^2 - 2 \langle AT, TB \rangle.
\end{align*}
Then, setting 
\begin{equation}
A = B = \begin{bmatrix}1 & 0 \\ 0 & 1 \end{bmatrix} \quad \text{and} \quad T = \begin{bmatrix} 0.5 & 0.5 \\ 0.5 & 0.5 \end{bmatrix}.
\end{equation}
Yields the special case.
\end{proof}

\gwtightest*
\begin{proof}
Recall \(\GW(A,B)=\min_{T\in\Pi_n} J(T;A,B)\),
\(\ell_a(A,B)=\min_{T\in\Pi_n} J_a(T;A,B)\),
\(\ell_b(A,B)=\min_{T\in\Pi_n} J_b(T;A,B)\), and
\(\ell(A,B)=\min_{P\in\Sigma_n} J(P;A,B)\).

\emph{Upper bound \(\GW \le \ell\).}
Since \(\Sigma_n\subset\Pi_n\), minimizing \(J\) over the larger set \(\Pi_n\)
can only decrease:
\[
    \GW(A,B) = \min_{T\in\Pi_n} J(T;A,B)
    \;\le\; \min_{P\in\Sigma_n} J(P;A,B) = \ell(A,B).
\]

\emph{Lower bound \(\ell_a \le \GW\).}
Now assume \(d\) is convex in both arguments. Fix \(T \in \Pi_n\) and \((i,l)\). The row and column constraints give \(\sum_k T_{kl}=1\) and \(\sum_j T_{ij}=1\), so applying Jensen's inequality to each argument of \(d\) in turn, 
\[
    d\big([AT]_{il},[TB]_{il}\big)
    = d\Big(\textstyle\sum_k A_{ik}T_{kl},\, \sum_j T_{ij}B_{jl}\Big)
    \;\le\;
    \sum_{j,k} T_{ij}T_{kl}\, d\big(A_{ik},B_{jl}\big).
\]
Summing over \((i,l)\) gives \(J_a(T;A,B)\le J(T;A,B)\), and minimizing over
\(T\in\Pi_n\) gives \(\ell_a(A,B)\le\GW(A,B)\).

\emph{Lower bound \(\ell_b \le \GW\).}
Fix \(T\in\Pi_n\) and \((i,k)\). Since \(\sum_{j,l} T_{ij}T_{kl}=1\) and convexity of
\(d\) in its second argument gives, by Jensen,
\[
    d\big(A_{ik}, [TBT^\top]_{ik}\big)
    = d\Big(A_{ik}, \textstyle\sum_{j,l} T_{ij}B_{jl}T_{kl}\Big)
    \;\le\;
    \sum_{j,l} T_{ij}T_{kl}\, d(A_{ik}, B_{jl}).
\]
Summing over \((i,k)\) gives \(J_b(T;A,B)\le J(T;A,B)\); minimizing over
\(T\in\Pi_n\) yields \(\ell_b(A,B)\le\GW(A,B)\).
\end{proof}

\gwspurious*
\begin{proof} We first prove that $\GW$ has no spurious minima and then exhibit counter examples for $\ell_a$ and $\ell_b$.\\
\textbf{GW has no spurious minima:} Let us assume that $\GW(A,B) = 0$. Then there exist $T\in \Pi_n$ such that:
\begin{align*}
    J(T;A,B) = \sum_{i,j,k,l=1}^n d(A_{ik}, B_{jl}) T_{ij} T_{kl} = 0
\end{align*}
Writing $T = \sum_{a} \lambda_a P^a$ the Birkhoff decomposition of $T$ as a convex combination of permutation matrices $P^a \in \Sigma_n$ \cite{birkhoff1946three}, we have
\begin{equation*}
\sum_{a,b}\sum_{i,j,k,l=1}^n d(A_{ik}, B_{jl}) \lambda_a \lambda_b P^a_{ij} P^b_{kl} = 0.
\end{equation*}
Without loss of generality, we can assume that all the $\lambda_a$ are strictly positive. The nonnegativity of $d$ then implies that, for all $a,b$,
\begin{equation*}
\sum_{i,j,k,l=1}^n d(A_{ik}, B_{jl}) P^a_{ij} P^b_{kl} = 0.
\end{equation*}
In particular, setting $a=b$, we have
\begin{equation*}
\sum_{i,j,k,l=1}^n d(A_{ik}, B_{jl}) P^a_{ij} P^a_{kl} = 0.
\end{equation*}
Since $P^a$ is a permutation matrix, this is equivalent to
\begin{equation*}
\sum_{i,k=1}^n d(A_{ik}, [P^a B (P^a)^\top]_{ik}) = 0,
\end{equation*}
which yields $A=P^aB(P^a)^\top$ by the separation property of $d$. Conversely, if $A=PBP^\top$ for some $P\in\Sigma_{n}$, then $J(P;A,B)=0$, proving the reverse implication.

\textbf{Counter example for $\ell_a$:} We provide a counter example for the case of the quadratic error $d(a,b)=(a-b)^2$. Assume that $A$ and $B$ are symmetric and satisfy $A\mathbf{1}_n=B\mathbf{1}_n=k\mathbf{1}_n$. For instance this apply when $A$ is  the adjacency matrices of the cycle graph $C_6$ and $B$ that of the disjoint union of two triangles. In that case $A$ anb $B$ are not isomorphic. Yet, setting transport plan $T=\frac{1}{n}\mathbf{1}_n\mathbf{1}_n^\top$, we obtain
\begin{align*}
AT
&= \frac{1}{n}(A\mathbf{1}_n)\mathbf{1}_n^\top
= \frac{k}{n}\mathbf{1}_n\mathbf{1}_n^\top,\\
TB
&= \frac{1}{n}\mathbf{1}_n(\mathbf{1}_n^\top B)
= \frac{1}{n}\mathbf{1}_n(B\mathbf{1}_n)^\top
= \frac{k}{n}\mathbf{1}_n\mathbf{1}_n^\top.
\end{align*}
Therefore, $AT=TB$ and $J_2(T;A,B)=0$ using the expression of Proposition~\ref{prop:not-equivalent}.

\textbf{Counter example for $\ell_b$:} Let $d$ be non negative and such that $d(x,x)=0$. Recall that $J_b$ is defined as
\begin{equation*}
J_b(T;A,B) = \sum_{i,k=1}^n d\left(A_{ik}, [TBT^\top]_{ik}\right).
\end{equation*}
If we take $A = b \mathbf{1}_n\mathbf{1}_n^\top$ and $T = \frac{1}{n}\mathbf{1}_n\mathbf{1}_n^\top$, we have
\begin{align*}
[TBT^\top]_{ik}
&= \sum_{j,l=1}^n T_{ij} B_{jl} T_{kl}
= \frac{1}{n^2} \sum_{j,l=1}^n B_{jl}
= A_{ik}.
\end{align*}
Therefore, $J_b(T;A,B) = 0$. If $B$ is not a constant matrix, then $A \neq P B P^\top$ for every permutation $P$, which concludes the proof.

\end{proof}

\newpage
\section{Experimental Details}
\label{app:exp}
\subsection{More details on the tasks}
We provide additional detail on the three tasks, and show one representative
(input, prediction, ground truth) triple per task in
Figure~\ref{fig:examples}.

\begin{figure}[h]
    \centering
    \includegraphics[width=\linewidth]{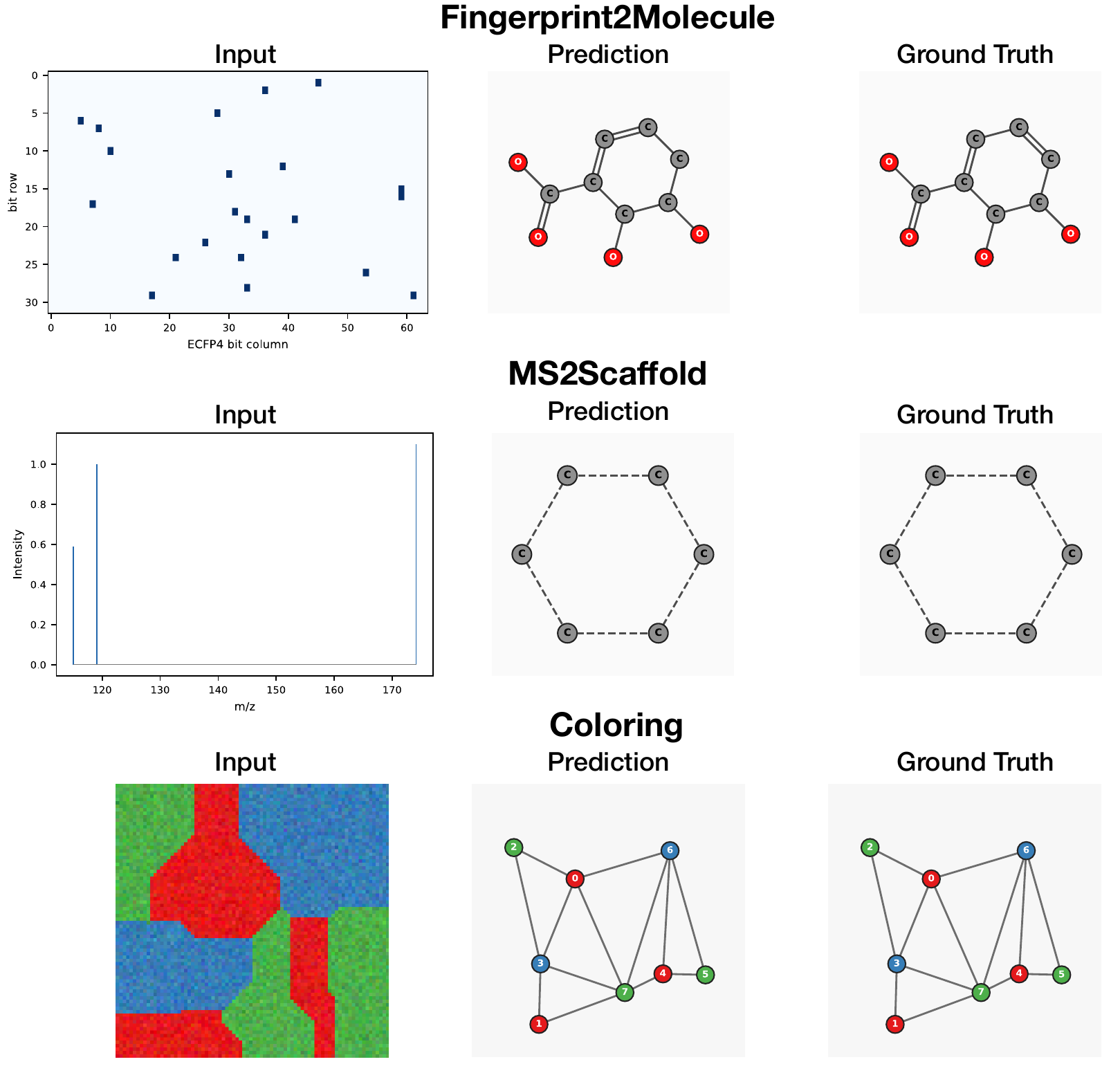}
    \caption{Examples of each supervised graph prediction task.}
    \label{fig:examples}
\end{figure}

\paragraph{Coloring.}
Following \citep{krzakala2024any2graph}, each instance pairs a noisy image of
colored regions with the graph encoding their adjacency: nodes are the colored
regions and edges connect regions that share a border, with node labels giving
the region color. The model must segment the regions and recover their adjacency
structure jointly. 

\paragraph{Fingerprint2Molecule.}
The input is the binary ECFP4 structural fingerprint of a molecule drawn from
PubChem \citep{kim2016pubchem}, and the target is the molecular graph itself
(atoms as labeled nodes, bonds as edges). Since the fingerprint encodes the
presence of local substructures but discards their global arrangement, the model
must reassemble a consistent molecular graph from these overlapping local cues.
The top row of Figure~\ref{fig:examples} shows the fingerprint bit pattern
(displayed as a sparse bit matrix) alongside a predicted graph that closely
matches the ground-truth structure.

\paragraph{MS2Scaffold.}
The input is a tandem (MS/MS) mass spectrum, a sparse set of mass-to-charge
(\(m/z\)) peaks with intensities, and the target is the molecule's
Murcko \emph{scaffold} \cite{landrum2013rdkit}, i.e.\ its core skelleton structure. Predicting the scaffold rather than the full molecule isolates
the core structural-inference problem while remaining a meaningful chemical
target, which is why we propose it as an intermediate step toward full
\emph{de novo} identification. We use the formula-based split of MassSpecGym
\citep{bushuiev2024massspecgym} introduced by \citep{krzakala2026msalign}, which
controls for train-test leakage. The two middle rows of
Figure~\ref{fig:examples} illustrate the range of difficulty: a large
heteroatom-rich macrocycle, where the prediction captures the overall ring but
misses some peripheral atoms, and a simple benzene-like scaffold that is
recovered exactly.

\paragraph{Dataset statistics.}
Table~\ref{tab:dataset_statistics} summarizes the target graph sizes and dataset
cardinalities. For Coloring with maximum capacity $N$, graph sizes are sampled
uniformly from $\{5,\ldots,N\}$ and the dataset contains $20{,}000N$ examples.
The molecular statistics reflect the distributions remaining after
preprocessing and filtering to the configured 32-node capacity.

\begin{table}[h]
\centering
\caption{Dataset statistics. Graph sizes correspond to the number of nodes.}
\small
\begin{tabular}{lrrrr}
\toprule
\textsc{Dataset} & \textsc{Min. Size} & \textsc{Mean Size} & \textsc{Max. Size} & \textsc{\# Samples} \\
\midrule
\textsc{Coloring 10} & 5 & 7.51  & 10 & 200,000 \\
\textsc{Coloring 20} & 5 & 12.51 & 20 & 400,000 \\
\textsc{Coloring 30} & 5 & 17.52 & 30 & 600,000 \\
\textsc{Coloring 40} & 5 & 22.52 & 40 & 800,000 \\
\textsc{Coloring 50} & 5 & 27.50 & 50 & 1,000,000 \\
\textsc{Coloring 60} & 5 & 32.49 & 60 & 1,200,000 \\
\midrule
\textsc{MS2Scaffold} & 3 & 17.25 & 32 & 168,573\\
\textsc{Fingerprint2Molecule}   & 1 & 21.82 & 32 & 80,897,368 \\
\bottomrule
\end{tabular}
\label{tab:dataset_statistics}
\end{table}

\subsection{Implementation Details}

\paragraph{Architecture.}
All tasks use a modality-specific encoder followed by a Transformer decoder
with one learned query per output node. For \textbf{Fingerprint2Molecule}, we
tokenize the active entries of a radius-2, 2048-bit ECFP4 fingerprint and
prepend a learned start token. For \textbf{MS2Scaffold}, we encode at most 128
annotated peaks without positional encodings and condition the encoder on the
collision energy. Furthermore, following Any2Graph~\cite{krzakala2024any2graph}, for both
molecular tasks we apply one-hop feature diffusion, augmenting the node-label
matrix $F$ with $AF$, where $A$ is the adjacency matrix. The model predicts this
diffused component through an auxiliary node head, and uses it in both matching
and reconstruction. This proved to increase performance in these hard tasks.
For \textbf{Coloring}, we use SegFormer-B1
features~\cite{xie2021segformer} and apply random horizontal reflections and
quarter-turn rotations during training. Table~\ref{tab:core_hyperparameters}
summarizes the remaining architecture and training settings.

\paragraph{Amortized alignment.}
Predicted and target node states are independently projected and compared using
pairwise $\ell_1$ distances. Each cost matrix is normalized by its sum before applying log-domain Sinkhorn. For labeled graphs, the objective in Eq.\eqref{eqloss}
combines presence, node-label, edge-label, adjacency, and marginal penalties
with weights $(1,1,0.2,0.5,1)$. The Sinkhorn regularization $\varepsilon$ is
selected separately for each task. The mirror baseline uses the same predictor
and reconstruction objective but computes a detached alignment for every
example.

\begin{table}[t]
\centering
\caption{Core architecture and training hyperparameters.}
\label{tab:core_hyperparameters}
\small
\setlength{\tabcolsep}{4pt}
\begin{tabular}{
    @{}
    p{0.27\linewidth}
    p{0.32\linewidth}
    p{0.32\linewidth}
    @{}
}
\toprule
\textbf{Parameter} &
\textbf{Molecular tasks} &
\textbf{Coloring} \\
\midrule
Input encoder
& 3-layer Transformer
& SegFormer-B1 \\

Encoder dimension
& 512
& 128 \\

Node decoder
& 3 layers, width 512
& 5 layers, width 256 \\

Attention heads
& 8
& 4 \\

Target GNN
& 3 layers, width 512
& 5 layers, width 128 \\

Laplacian PE dimension
& 8
& 8 \\

Matcher dimension
& 128
& 256 \\

Dropout
& 0.1
& 0.1 \\

Sinkhorn iterations
& 20
& 20 \\

Optimizer
& AdamW
& AdamW \\

Learning rate
& $10^{-4}$
& $10^{-4}$ \\

Minimum learning rate
& $10^{-5}$ 
& $10^{-5}$ \\

Learning-rate schedule
& 5\% warm-up + cosine
& 5\% warm-up + cosine \\

Batch size
& 128
& Size-dependent \\

Maximum epochs
&  100 ms2scaffold / 10 fp2graph
& Size-dependent \\

Gradient clipping
& 1.0
& 0.1 \\

Precision
& FP32
& FP32 \\

Hardware
& 1 NVIDIA V100
& 1 NVIDIA V100 \\
\bottomrule
\end{tabular}
\end{table}

\begin{table}[t]
\centering
\caption{Size-dependent Coloring hyperparameters.}
\label{tab:coloring_training_budgets}
\small
\setlength{\tabcolsep}{0pt}
\begin{tabular*}{\linewidth}{
    @{\extracolsep{\fill}}
    l
    rrrrrr
    @{}
}
\toprule
Maximum graph size $n$
& 10 & 20 & 30 & 40 & 50 & 60 \\
\midrule
Batch size
& 1536 & 1536 & 512 & 256 & 128 & 128 \\

Epochs
& 120 & 120 & 100 & 50 & 30 & 20 \\
\bottomrule
\end{tabular*}
\end{table}

\subsection{More details on the results}

\paragraph{Model selection.}
We performed small task-specific validation sweeps over the input encoder,
decoder, target encoder, positional features, and matcher representation. We
then fixed the best validation configuration for each task, as summarized in
Table~\ref{tab:core_hyperparameters}.

\paragraph{Main comparison.}
For Table~\ref{tab:main_results}, we select $\varepsilon$ on validation data.
The selected values are $3\times10^{-5}$ for both molecular tasks,
$4.5\times10^{-5}$ for Coloring 10, and $7.5\times10^{-6}$ for Coloring 20.
All models otherwise follow Tables~\ref{tab:core_hyperparameters}
and~\ref{tab:coloring_training_budgets}. Due to its substantially higher
per-example matching cost, Any2Graph is trained for half the corresponding
epoch budget, approximately the largest budget compatible with our 20-hour
limit.

\paragraph{Efficiency--quality frontier.}
Figure~\ref{fig:main} (left) compares the following configurations on
Coloring 20.

\begin{table}[H]
\centering
\caption{Configurations used for the efficiency--quality comparison.}
\label{tab:efficiency_grid}
\scriptsize
\setlength{\tabcolsep}{3pt}
\begin{tabular}{@{}lcccc@{}}
\toprule
\textbf{Method} & \textbf{Regularization}
& \textbf{$K_{\mathrm{in}}$}
& \textbf{$K_{\mathrm{out}}$}
& \textbf{Marginal KL} \\
\midrule
Mirror
& $\tau=0.1$
& $\{1,10,100\}$
& $\{10,100\}$
& N/A \\
Matcher
& $\varepsilon=10^{-5}$
& $\{5,10,50,100\}$
& -- & No \\
Matcher + KL
& $\varepsilon=10^{-5}$
& $\{5,10,50,100\}$
& -- & Yes \\
\bottomrule
\end{tabular}
\end{table}

We plot the frontier of edit-like distance against seconds per sample for each
method. The mirror configuration
$K_{\mathrm{in}}=K_{\mathrm{out}}=100$ is omitted because its runtime is an
extreme outlier.

\paragraph{Scaling with graph capacity.}
Each Coloring dataset samples graph sizes
$m\sim\mathrm{Uniform}\{5,\ldots,N\}$, where $N$ is the maximum graph size.
Although individual graphs may be smaller, the matcher operates on $N$
predicted and $N$ padded target slots and normalizes the resulting
$N\times N$ cost matrix by its sum. If the unnormalized cost distribution were
stable across capacities, this would shrink the relevant cost contrasts
approximately as $N^{-2}$. We therefore use
\[
    \bar{\varepsilon}_N
    =
    \varepsilon_{20}\left(\frac{20}{N}\right)^2,
    \qquad
    \varepsilon_{20}=7.5\times10^{-6},
\]
only to center a local validation sweep.

\begin{table}[H]
\centering
\caption{Coloring epsilon candidates, in units of $10^{-6}$.}
\label{tab:coloring_epsilon_sweep}
\scriptsize
\setlength{\tabcolsep}{3pt}
\begin{tabular*}{\columnwidth}{
    @{\extracolsep{\fill}}lrrrrrr@{}
}
\toprule
Maximum size $N$ & 10 & 20 & 30 & 40 & 50 & 60 \\
\midrule
$0.75\bar{\varepsilon}_N$
& 22.5 & 5.625 & 2.50 & 1.406 & 0.90 & 0.625 \\
$\bar{\varepsilon}_N$
& 30.0 & 7.500 & 3.33 & 1.875 & 1.20 & 0.833 \\
$1.5\bar{\varepsilon}_N$
& 45.0 & 11.25 & 5.00 & 2.813 & 1.80 & 1.250 \\
\bottomrule
\end{tabular*}
\end{table}

The scaling is approximate because the distribution of graph sizes, padding
costs, and learned cost contrasts also changes with $N$. We therefore select
the best candidate using validation data. The mirror solver does not use
$\varepsilon$ and is evaluated with $\tau=0.1$ and 20 inner and outer
iterations.


\end{document}